\documentclass[times,review,10pt]{elsarticle}

\usepackage{amssymb}
\usepackage{amsmath}

\usepackage{siunitx}
\usepackage{xspace}
\usepackage{lipsum}

\usepackage{adjustbox}
\usepackage{amsthm}
\usepackage{subcaption}
\usepackage[numbers]{natbib}
\usepackage{amsmath,amsfonts}
\usepackage[ruled,vlined]{algorithm2e}
\usepackage{caption}
\usepackage{varwidth}
\usepackage{array}
\usepackage{amsmath}
\usepackage{amssymb}
\usepackage{mathtools}
\usepackage{booktabs}
\usepackage{multirow}

\usepackage{amsmath}
\usepackage{algorithmic}

\usepackage{url}

\newtheorem{theorem}{Theorem}
 
\newtheorem{definition}{Definition}
\newtheorem{intuition}{Intuition}

\newcommand{\N}{\mathcal{N}}
\newcommand{\B}{\mathcal{B}}

\def\tsc#1{\csdef{#1}{\textsc{\lowercase{#1}}\xspace}}
\tsc{WGM}
\tsc{QE}
\tsc{EP}
\tsc{PMS}
\tsc{BEC}
\tsc{DE}
\journal{Knowledge-Based Systems}

\begin{document}

\begin{frontmatter}

\title{Collaborative Weighting with Pessimistic Critic for Mitigating Overestimation in Off-Policy Reinforcement Learning}


\author[1]{Gong Gao}
\ead{2310925@tongji.edu.cn}


\affiliation[1]{organization={School of Computer Science and Technology, Tongji University},
                postcode={200092},
              city={Shanghai},
                country={China}}

  
     \author[2]{Xiao Lai*} 
   \ead{laixiao@sues.edu.cn}
\author[1]{Ziqi Xie} 

\author[1]{Guojie Chen} 

\author[1]{Xianhui Liu} 
\author[1]{Weidong Zhao} 

\affiliation[2]{organization={Shanghai University of Engineering Science},
                postcode={201620‌}, 
                city={Shanghai},
                country={China}}

\cortext[1]{Xiao Lai}

 \begin{abstract}
Deep off-policy reinforcement learning algorithms for continuous control typically rely on neural value function approximation to guide policy improvement. However, temporal-difference (TD) learning introduces noisy targets, resulting in non-stationary optimization, while greedy policy updates amplify early-stage estimation errors. The recursive propagation of such errors leads to persistent overestimation bias and degraded training stability in actor-critic methods. Existing approaches attempt to alleviate this issue via prioritized sampling or modified value learning objectives, but often overemphasize high-uncertainty transitions caused by limited data coverage or bootstrapping errors, thereby further amplifying bias.
In this paper, we propose Collaborative Weighting Actor-Critic (CWAC), a unified framework that explicitly accounts for predictive uncertainty in value estimation. CWAC employs distributional critic to model return uncertainty and introduces a collaborative weighting mechanism that jointly reweights TD-errors and uncertainty, enabling robust learning from reliable samples while suppressing noisy updates. In addition, we incorporate a stochastic pessimistic value estimation scheme via sampling from the return distribution, which effectively mitigates error propagation during policy improvement.
CWAC can be seamlessly integrated into existing off-policy algorithm frameworks such as SAC, TD3, and DDPG with minimal overhead. Empirical results demonstrate that our proposed method significantly enhances performance across a diverse range of simulated tasks.
\end{abstract}

\begin{keyword}
off-policy reinforcement learning \sep greedy policy updates \sep stochastic pessimistic value estimation \sep collaborative weighting mechanism

\end{keyword}

\end{frontmatter}
\section{Introduction}
\label{Introduction}
Off-policy reinforcement learning (RL) algorithms achieve high sample efficiency and have been widely adopted in robot control~\cite{xiao2024reinforcement,kim2026robot}, autonomous driving~\cite{shi2023physics,yang2026raw2drive}, and industrial scheduling~\cite{liu2025knowledge,ngwu2026reinforcement,hua2025solving}. These methods estimate long-term discounted returns using $Q$-functions trained from reused experience in a replay buffer. Modern actor-critic (AC) frameworks~\citep{haarnoja2018soft,fujimoto2018addressing} parameterize these $Q$-functions with deep neural networks and optimize them via temporal-difference (TD) learning~\citep{tesauro1995temporal}. 


A key challenge in off-policy RL is the weak learning signal arising from approximate temporal-difference targets, together with the propagation and amplification of estimation errors during bootstrapping. As illustrated in Figure~\ref{fig:intro_td_error}, the greedy operator $\max_a Q^k(s, a)$ inherently favors actions associated with positive estimation noise.
This selection bias amplifies early estimation errors, triggering a self-reinforcing feedback loop that leads to systematic overestimation and undermines training stability.
\begin{figure*}[ht]
\begin{center}
\centerline{\includegraphics[width=1.0\textwidth]{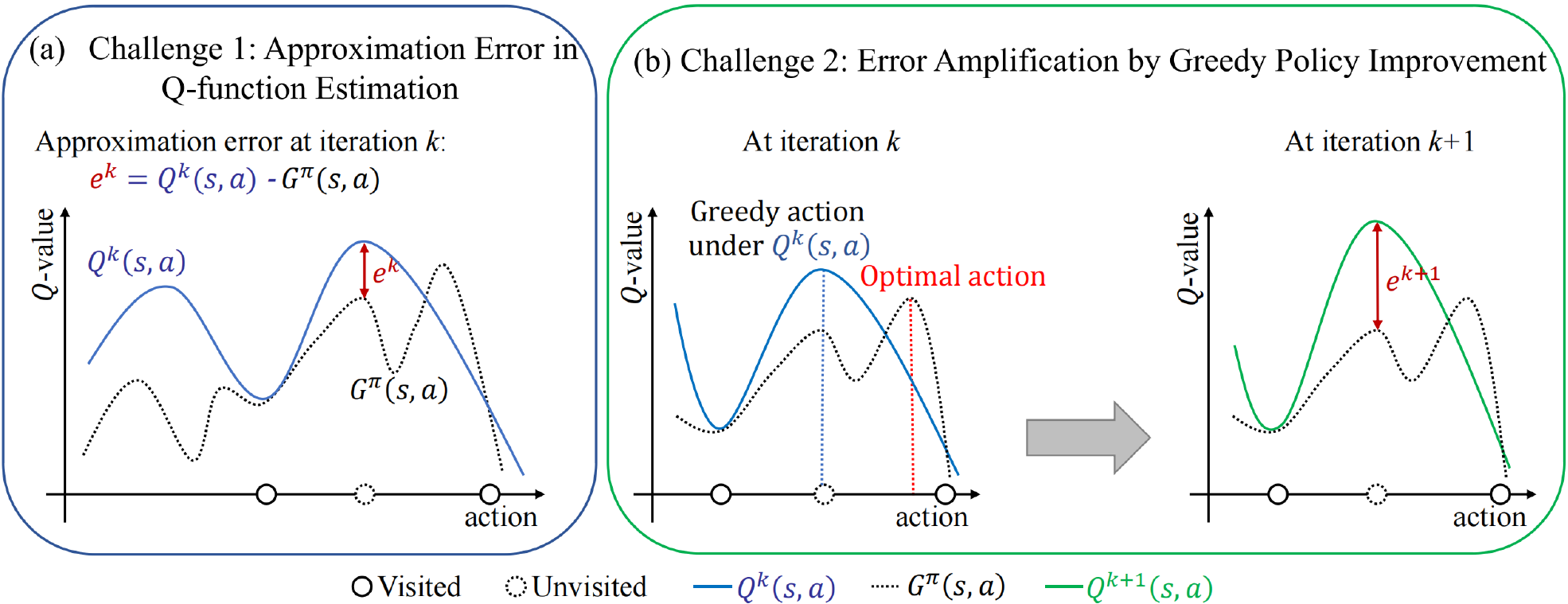}}
\caption{Illustration of two fundamental challenges in off-policy RL: (a) approximation errors in Q-function estimation, and (b) error amplification induced by greedy policy improvement. Here, $Q^k$ and $Q^{k+1}$ denote the value estimates at the $k$-th and $(k+1)$-th training iterations, respectively. $G^\pi(s,a)$ represents the ground-truth action-value function under policy $\pi$, defined by the standard infinite-horizon discounted return as
$
G^\pi(s,a)
=
\mathbb{E}_{a\sim \pi_\phi}
\left[
\sum_{t=0}^{\infty}
\gamma^t \mathcal{R}(s,a)
\;\middle|\;
s_0=s,\, a_0=a
\right].
$
}
\label{fig:intro_td_error}
\end{center}
\vskip -0.2in
\end{figure*}

Existing methods typically tackle this issue from two perspectives: pessimistic value estimation and adaptive sample weighting. Pessimistic approaches~\citep{cetin2023learning,nauman2024overestimation,wang2025dynamic} mitigate overestimation by penalizing high-uncertainty actions. While a moderate degree of pessimism can be effective, excessive conservatism in online settings may hinder exploration, leading to suboptimal performance~\citep{karimpanal2023balanced,tan2024adaptive}. On the other hand, adaptive sample weighting methods emphasize transitions with large TD-error~\citep{schaul2015prioritized,fujimoto2020equivalence,zhu2023importance}. However, such signals are often entangled with uncertainty, which can amplify noisy interactions and introduce biased updates during training.

To address these challenges, a lightweight uncertainty modeling approach based on a distributional critic is first introduced to estimate predictive uncertainty from return distributions. Building upon this formulation, Collaborative Weighting Actor-Critic (CWAC) is proposed, integrating stochastic pessimistic value estimation with an uncertainty-aware collaborative weighting mechanism. The proposed collaborative weighting jointly and adaptively calibrates the influence of TD-errors and predictive uncertainty: predictive uncertainty reweights TD-errors to suppress unreliable learning signals, while TD-errors reciprocally refine uncertainty estimation. In addition, adaptive pessimistic value maximization is achieved via stochastic sampling from the return distribution, reducing value overestimation and error propagation while maintaining sufficient exploration. As a result, CWAC improves both sample efficiency and training stability in off-policy reinforcement learning.
Our contributions can be summarized in threefold:
\begin{itemize}
\item We propose CWAC, which incorporates a collaborative weighting mechanism that adaptively favors optimization according to estimation error and predictive uncertainty, thereby suppressing unreliable updates and improving learning stability.
\item We develop a stochastic pessimistic value estimation scheme based on distributional critic, where policy improvement is performed via sampling from the return distribution rather than direct maximization of expected Q-values, thereby mitigating error amplification induced by greedy updates.
\item Extensive experiments on continuous control benchmarks demonstrate that CWAC achieves competitive or superior performance compared to state-of-the-art methods, while significantly improving sample efficiency and training stability.
\end{itemize}

The remainder of this paper is organized as follows. Section~\ref{sec:pre} introduces the background. Related work is reviewed in Section~\ref{related_wor}. In Section~\ref{sec:method}, we present the proposed Collaborative Weighting Actor-Criti (CWAC), including the overall framework and implementation details. Section~\ref{sec:experiment} reports experimental results on 8 challenging continuous control tasks, demonstrating the effectiveness and practicality of CWAC. Finally, Section~\ref{conclusion} concludes the paper and outlines potential future directions.

\section{Preliminaries}
\label{sec:pre}
\textbf{Reinforcement Learning.}
Within the standard framework of the Markov decision process (MDP), RL can be formulated as $\mathcal{M}=\langle\mathcal{S}, \mathcal{A}, \mathcal{P}, \mathcal{R}, \gamma\rangle$. Here, ${\cal S}$ denotes the state space, ${\cal A}$ denotes the action space, $\mathcal{P}(\cdot \mid s,a)$ stands for transition dynamics, $\mathcal{R}: {\cal S\times A}\to \mathbb{R}$ denotes the reward function, and $\gamma\in (0,1]$ is the discount factor. Reinforcement learning aims at finding a policy $\pi(\cdot \mid s)$ such that the expected cumulative long-term rewards $J_\pi(\phi) = \mathbb{E}_{s \sim \mathcal{B}, a \sim \pi_\phi}[\sum_{t=0}^\infty\gamma^t \mathcal{R}(s_t,a_t)]$ are maximized. The state and state-action distributions induced by $\pi$ are denoted by $\rho_{\pi}(s)$ and $\rho_{\pi}(s, a)$, respectively.

The optimal policy can be obtained via a maximum entropy variant of policy iteration, which comprises two alternating steps: (a) policy evaluation and (b) policy improvement. Given a policy $\pi$, the corresponding Q-function is learned via the soft Bellman operator $\mathcal{T}^\pi$:
\begin{equation}
\label{eq.soft_bellman}
\mathcal{T}^\pi Q(s,a)=\mathcal{R}(s,a)+\gamma \mathbb{E}_{s^{'}\sim \rho_\pi,a^{'}\sim \pi_{\phi}}\left[Q_{\theta ^{'} }(s^{'},a^{'}) 
-\alpha \log\pi_{\phi} (a^{'}|s^{'})\right],
\end{equation}
where $\theta$ denotes the parameters of the Q-network, and $\phi$ denotes the parameters of the Gaussian policy. The entropy coefficient $\alpha$ mitigates overestimation bias by
penalizing overly deterministic policies.
The parameters of the Q-function are optimized by minimizing the soft Bellman residual, which can be formalized as
\begin{equation}
\mathcal{L}_{Q}(\theta)
=
\mathbb{E}_{(s,a)\sim\mathcal{B}}
\left[
\left(
Q_\theta(s,a) - \mathcal{T}^\pi Q(s,a)
\right)^2
\right].
\end{equation}

\textbf{Weighted Q-learning.}
Weighted Q-learning~\cite{zhang2017weighted} introduces adaptive importance weights into temporal-difference learning to mitigate value estimation bias. Instead of treating all transitions equally, weighted Q-learning assigns state-action dependent coefficients to modulate the contribution of each sample during value updates. This framework integrates a range of approaches, including prioritized experience replay~\citep{schaul2015prioritized} and uncertainty-aware weighted learning methods~\citep{wu2021uncertainty}.

Weighted Q-learning generalizes this objective by introducing an adaptive weight function $\omega(s, a)$
\begin{equation}
\label{eq:weighting_q_learning}
\mathcal{L}_{Q}^{\omega}(\theta)
=
\mathbb{E}_{(s,a)\sim\mathcal{B}}
\left[
\omega(s,a) \cdot
\left(
Q_\theta(s,a) - \mathcal{T}^\pi Q(s,a)
\right)^2
\right],
\end{equation}
where $\omega(s,a) $ modulates the contribution of each transition to the
critic update.

\section{Related Work}
\label{related_wor}
\textbf{Overestimation Bias in Reinforcement Learning.}
Overestimation bias originates from the interaction between stochastic value estimation errors and the maximization operator in TD learning. Even in tabular settings, applying the $\max$ operator to noisy estimates induces systematic overestimation. While DQN~\citep{mnih2015human} improves training stability via target networks, it does not eliminate this issue, which is further exacerbated under function approximation. To address this, Double Q-learning~\citep{van2016deep} decouples action selection from evaluation to reduce maximization bias. Building on this idea, Maxmin DQN~\citep{lan2020maxmin} explicitly mitigates overestimation by maintaining an ensemble of Q-networks and using the minimum overestimates, providing a more conservative value approximation.

In continuous control, actor–critic methods further mitigate overestimation via coordinated architectural and algorithmic refinements. For instance, TD3~\citep{fujimoto2018addressing} incorporates clipped double Q-learning, delayed policy updates, and target policy smoothing to reduce estimation bias. Similarly, maximum entropy methods such as SAC~\citep{haarnoja2018soft} introduce entropy regularization to soften policy evaluation, thereby improving robustness while partially alleviating overestimation.
Nevertheless, these approaches still suffer from overestimation bias. For instance, \citep{quangaugmenting} and \citep{gao2026improving} argue that off-policy algorithms exhibit value overestimation, which hampers efficient policy exploitation, and accordingly propose to improve the decision-making process to enhance sample efficiency.
In contrast,~\citep{oren2025value} attributes suboptimal performance of neural network–parameterized actor–critic methods to insufficiently strong value guidance, and advocates for more aggressive, greedier value evaluation to provide stronger policy improvement signals.
However, these methods fail to tackle the root cause of overestimation: the coupling between function approximation errors in parameterized Q-networks and the error amplification introduced by greedy policy updates.

\textbf{Uncertainty-Aware Value Estimation and Pessimism.}
 A prominent line of work addresses overestimation bias via uncertainty-aware value estimation. These approaches introduce pessimism by penalizing high-uncertainty Q-values or by optimizing lower confidence bounds. For instance, \citep{wu2021uncertainty,bai2022pessimistic} adopt Bayesian formulations to quantify epistemic uncertainty and construct pessimistic value estimates, thereby mitigating overestimation bias. \citep{gong2023adaptive} further leverages both uncertainty and familiarity to regulate value estimation, enabling adaptive adjustment across state–action pairs and effectively suppressing error amplification.

Building on this direction, \citep{guo2022model,wang2025dynamic} propose to dynamically adapt the degree of pessimism, treating it as either a learnable parameter or a constrained variable to better balance conservatism and exploration. Moreover, \citep{moskovitz2021tactical} provide empirical evidence supporting this perspective, while \citep{karimpanal2023balanced,tan2024adaptive} establish theoretically that estimation bias is not universally detrimental; in certain environments, a controlled degree of overestimation can even be beneficial.
In contrast to prior approaches, the proposed CWAC algorithm introduces adaptive stochastic pessimism into value estimation, effectively mitigating overestimation while preserving exploratory capacity.



\textbf{Importance Weighting Mechanisms.}
A line of work seeks to improve sample efficiency by reweighting transitions to emphasize informative updates. Existing approaches often upweight samples using advantage-guided schemes~\citep{peng2019advantage,nair2020awac,chen2022lapo} or energy-based criteria~\citep{zhang2025energyweighted,alles2025flowq}. However, these approaches typically lack explicit quantification of the uncertainty associated with the guiding signals, which may result in misleading updates when the guidance is inaccurate.

Another line of research focus on adaptive reweighting strategies that prioritize transitions deemed critical for learning. Representative approaches include prioritized experience replay (PER)~\citep{schaul2015prioritized,fujimoto2020equivalence,hassani2025improved}. PER~\citep{schaul2015prioritized} prioritizes samples according to the magnitude of the TD-error, which, however, may overemphasize noisy transitions. LAP~\citep{fujimoto2020equivalence} refines this strategy by adopting loss-aware prioritization, selecting samples based on gradient magnitude.
BETDQNet~\citep{hassani2025improved} further constructs more reliable weighting schemes by jointly considering one-step TD-errors and state-wise averaged errors.
While these methods can alleviate Q-value overestimation to some extent, TD-error is often correlated with estimation uncertainty, and thus, such priority-based mechanisms may inadvertently amplify bias and induce training instability if not properly controlled.


\section{Methods}
\label{sec:method}

\subsection{Stochastic Pessimistic Value Estimation}
\begin{definition}[Stochastic Pessimistic Value Sampling]
\label{thm:pessimism_exploration}
Given distributional critic to characterize the return distribution via its mean $Q_\theta(s, a)$ and standard deviation $\sigma_\theta(s, a)$.
Let the biased Q-value estimator be defined as:
\begin{equation}
\label{eq:pessimistic_q}
    \mathcal{Z}_{\theta}(s,a) = Q_{\theta}(s,a) - b(\sigma_\theta(s,a)),
\end{equation}  
where $b(\sigma)=|\epsilon|\cdot \sigma$ denotes a stochastically sampled value, $\epsilon\sim \mathcal{N}(0,\mu \cdot \mathrm{I})$ is a Gaussian noise term sampled from a zero-mean Gaussian distribution with covariance $\mu \cdot \mathrm{I}$, where $\mu$ controlling the magnitude of stochastic perturbations.

\end{definition}

\begin{intuition}
Consider the decomposition of the action-value estimate $Q_{\theta}(s, a) = G^\pi(s, a) + e(s, a)$, where $e(\cdot)$ denotes the function approximation error. 
By substituting the raw Q-value $Q_{\theta}$ with the pessimistic estimator $\mathcal{Z}_{\theta} = Q_{\theta}(s, a) - b(\sigma_\theta) = G^\pi(s, a) + e(s, a) - b(\sigma_\theta)$, the residual between the predicted and ground-truth action-values under policy $\pi$ is redefined as $\tilde{e}(s, a) = e(s, a) - b(\sigma_\theta)$. The resulting learning dynamics are thus governed by two competing mechanisms:

 \textbf{Mitigation of Error Amplification (Appropriate Bias).} A properly moderate bias shifts the mean of the error distribution $\tilde{e}$ such that $\mathbb{E}[\max \mathcal{Z}_{\theta}] \approx G^\pi$. This neutralizing effect arrests the recursive amplification of approximation errors, effectively mitigating the overestimation bias.

 \textbf{Suppression of Exploration (Excessive Pessimism).}
Let $\mathcal{D}_{\text{train}}$ denote the replay buffer and $\mathcal{D}_{\text{test}}$ denote the rollout distribution. 
We assume the uncertainty of value estimation satisfies
$
\sigma(s,a)_{(s,a)\sim \mathcal{D}_{\text{train}}}
\;\le\;
\sigma(s,a)_{(s,a) \sim \mathcal{D}_{\text{test}}},
$
reflecting higher uncertainty in unvisited interaction.
Consider a state $s$ and two actions $a_1$ and $a_2$. The state-action pair $(s, a_1)$ corresponds to a previously observed interaction, i.e., $(s, a_1) \sim \mathcal{D}_{\text{train}}$, whereas $(s, a_2)$ corresponds to an unvisited interaction, i.e., $(s, a_2) \sim \mathcal{D}_{\text{test}}$.
Assume both actions have comparable true value:
$
G^\pi(s,a_1)<
G^\pi(s,a_2)
,
$
so that $a_2$ is the truly better action. 

Under pessimistic estimation, we introduce a pessimistically adjusted value $\mathcal{Z}$ by penalizing the Q-value with uncertainty. The difference between the pessimistic values is given by
$
\mathcal{Z}(s,a_2)-\mathcal{Z}(s,a_1)
=  
\tilde{e}(s,a_2)
-
\tilde{e}(s,a_1)  .
$
Consider a simple deterministic MDP with initial state $s$ and two actions $a_1$ and $a_2$. The true returns satisfy $G^\pi(s,a_1)=1$ and $G^\pi(s,a_2)=2$. Suppose the approximation errors are $e(s,a_1)=0.01$ and $e(s,a_2)=0.1$, with predictive uncertainties $b(\sigma(s,a_1))=0.01$ and $b(\sigma(s,a_2))=0.5$. 
Under pessimistic adjustment, the effective errors become $\tilde{e}(s,a_1)=0$ and $\tilde{e}(s,a_2)=-0.4$. Consequently,
$
\mathcal{Z}(s,a_1) > \mathcal{Z}(s,a_2),
$
indicating that excessive pessimism can reverse the true action preference and suppress the exploratory action.

 
\end{intuition}

\subsection{Collaborative Weighting Actor-Critic}

\begin{theorem}[Weighted Q-learning is not enough]
\label{thm:weighting_failure}
Consider the weighted TD-error objective $\mathcal{L}^{\omega} = \mathbb{E}[\omega(s,a) \cdot \delta(s,a)^2]$, where the weighting function $\omega(s,a) \propto |\delta(s,a)|^p$ (e.g., Prioritized Experience Replay~\cite{schaul2015prioritized}). 
Suppose the soft Bellman target $\mathcal{T}^\pi Q$ is corrupted by heteroscedastic noise $\eta(s, a)$ with $\mathbb{E}[\eta] > 0$, the prioritization weight $\omega(s, a)$ induces a biased gradient update that increases with the noise magnitude. Consequently, high-variance state-action regions are systematically overemphasized, amplifying overestimation bias and potentially leading to unstable or divergent value estimates.
\end{theorem}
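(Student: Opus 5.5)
We formalize the statement in a minimal model and compute the gradient of the weighted objective directly. Write the observed TD-error as $\delta(s,a) = Q_\theta(s,a) - \mathcal{T}^\pi Q(s,a) - \eta(s,a) = \delta_0(s,a) - \eta(s,a)$, where $\delta_0$ is the noise-free residual and $\eta$ is heteroscedastic noise with mean $m(s,a)=\mathbb{E}[\eta\mid s,a] > 0$ and variance $v(s,a)$ that varies across state-action pairs. We treat the weight $\omega = c\,|\delta|^p$ as a stop-gradient quantity, as PER does. The gradient of $\mathcal{L}^\omega$ with respect to $\theta$ is then
\begin{equation*}
\nabla_\theta \mathcal{L}^\omega = 2\,\mathbb{E}\left[ c\,|\delta_0-\eta|^p\,(\delta_0-\eta)\,\nabla_\theta Q_\theta(s,a)\right].
\end{equation*}
Per state-action pair, the stationary point of the update is the value $Q^\star$ solving $\mathbb{E}_\eta[|\delta_0-\eta|^p(\delta_0-\eta)]=0$. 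We compare this with the unweighted fixed point $Q=\mathcal{T}^\pi Q + m$, which is already biased upward by $m$.

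\textbf{Step 1 (bias of the weighted fixed point).} Take $p\ge 0$ and the scalar function $g(x)=\mathbb{E}[\,|x-\eta|^p(x-\eta)]$, where $x=Q-\mathcal{T}^\pi Q$. This function is monotone increasing in $x$, so its root $x^\star$ is unique and is a generalized location of $\eta$: for $p=0$ it is the mean, and for $p=1$ it is the root of $\mathbb{E}[(x-\eta)|x-\eta|]=0$. We then show that $x^\star \ge m$ whenever $\eta$ is right-skewed, and that $x^\star$ is increasing in the scale of $\eta$. The cleanest concrete instance is a family $\eta = m + \sqrt{v}\,\xi$ with a fixed skewed $\xi$, or a positively-supported noise such as the positive part of a Gaussian. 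Scaling gives $x^\star = m + \sqrt{v}\,\kappa_p$ with $\kappa_p$ depending only on $\xi$ and $p$. A Taylor expansion in $p$ around $0$, where $\kappa_0=0$, gives $\kappa_p \approx p\,\mathbb{E}[\xi^2\,\mathrm{sgn}\text{-weighted}]$, which is positive under positive third moment. This yields a bias $m + \sqrt{v}\,\kappa_p$ that grows with the noise magnitude, as the theorem claims.

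\textbf{Step 2 (overemphasis of high-variance regions).} The expected weight satisfies $\mathbb{E}[\omega\mid s,a] = c\,\mathbb{E}|\delta_0-\eta|^p \ge c\,\mathrm{const}\cdot v^{p/2}$. So in the population gradient, pairs with larger $v$ receive proportionally larger effective step sizes. With a shared function approximator, the shared parameters are then pulled toward the upward-biased targets of those pairs. We exhibit this with a linear critic $Q_\theta=\theta^\top\phi(s,a)$: the weighted least-squares solution is $\theta^\star = (\mathbb{E}[\bar\omega\phi\phi^\top])^{-1}\mathbb{E}[\bar\omega\phi(\mathcal{T}^\pi Q + x^\star)]$, so the overall bias is a $\bar\omega$-weighted average of $x^\star(s,a)$ tilted toward large $v$.

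\textbf{Step 3 (amplification and instability).} The biased target is fed back through bootstrapping, so the bias is compounded. We write $Q_{k+1} = \mathcal{T}^\pi Q_k + x^\star_k$. The accumulated bias then behaves like $\sum_j \gamma^j x^\star$, which is upper-bounded only by $\|x^\star\|_\infty/(1-\gamma)$. If $v$ itself grows with $|Q|$, as it does when the noise stems from approximation error, the recursion can lose contraction and diverge. We present this part only as the qualitative "potentially unstable" clause, via a simple example where $v_k \propto Q_k^2$.

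The main obstacle is Step 1. The sign of $x^\star - m$ depends on the shape of $\eta$, so the claim is not true for arbitrary noise. We therefore expect to add an explicit assumption, either right-skewness or $\eta\ge 0$ almost surely with a scale family. Under that assumption we would rely on the scaling argument rather than seek full generality.
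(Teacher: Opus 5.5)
Your argument is essentially sound, but it takes a genuinely different route from the paper.

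\textbf{The paper's route.} The paper's proof sketch reasons about the expected \emph{update}. From $\nabla_\theta\mathcal{L}^\omega\approx\mathbb{E}[|\delta|^p\delta\,\nabla_\theta Q_\theta]$ it argues that the weight $|\delta|^p$ is positively correlated with $\eta$, so the update is tilted toward samples with large positive noise. It then asserts informally that such samples lie in poorly visited regions.

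\textbf{Your route.} You analyze where the weighted iteration \emph{settles}.
\begin{itemize}
\item The per-pair stationary point is the $p$-location $x^\star$ of $\eta$.
\item The scale-family identity $x^\star=m+\sqrt{v}\,\kappa_p$ gives an excess bias that grows with the noise scale.
\item Step~2 supplies the cross-pair overemphasis: the expected weight is of order $v^{p/2}$, which only matters through a shared approximator.
\item Step~3 handles propagation through bootstrapping.
\end{itemize}

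\textbf{What each buys.} Your route separates transient drift from equilibrium bias. For symmetric $\xi$ the paper's tilt is real: at $\delta_0=0$ the $\omega$-weighted mean of $\eta$ is about $(p+1)m$ when $m\ll\sqrt{v}$. Yet $g(m)=0$ in that case, so the per-pair fixed point coincides with the unweighted one. The ``systematic overestimation'' conclusion therefore needs either your asymmetry hypothesis, or your Step~2 mechanism combined with $m(s,a)$ increasing in $v$. The paper's sketch is shorter and needs no shape assumption. However, it only describes the drift of the update and does not establish where the iteration ends up.

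\textbf{Correction in Step~1.} Since $g$ is increasing and $g(m)=-\mathbb{E}[|\eta-m|^p(\eta-m)]$, the exact condition for $x^\star>m$ is $\mathbb{E}[|\xi|^p\xi]>0$. This is the positive-third-moment condition only when $p=2$.

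Your small-$p$ expansion is also off. Implicit differentiation gives $\kappa_p\approx p\,\mathbb{E}[\xi\ln|\xi|]$, and this can be negative while $\mathbb{E}[\xi^3]>0$. For instance, take $\xi\in\{0.3,\,-1.685,\,10\}$ with probabilities $0.79,\,0.2,\,0.01$. It has mean zero and $\mathbb{E}[\xi^3]\approx 9.1$, but $\mathbb{E}[\xi\ln|\xi|]\approx-0.23$ and even $\mathbb{E}[|\xi|^{0.4}\xi]<0$. That is exactly the PER-style exponent range.

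So state your hypothesis directly as $\mathbb{E}[|\xi|^p\xi]>0$, and verify it for whichever noise family you use, e.g.\ the positive-part Gaussian.

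Finally, your linear-critic formula is an IRLS-type linearization, because $\bar\omega$ depends on $\theta$ through $\delta$. It is fine as a heuristic but should be labeled as one.
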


\textbf{Proof sketch:}
Consider the observed TD-error $\delta = \delta^\star + \eta$, where $\delta^\star$ is the true residual and $\eta$ is the approximation noise. The weighted gradient is given by $\nabla_\theta \mathcal{L}^\omega \approx \mathbb{E}[|\delta^\star + \eta|^p \cdot (\delta^\star + \eta)  \cdot\nabla_\theta Q_\theta]$. Due to the positive correlation between the weight $|\delta^\star + \eta|^p$ and the noise $\eta$, the expected update is skewed toward samples where $\eta$ is large and positive. The exponent $p$ controls the degree of this weighting effect. In RL, these high-residual samples are often located in unvisited or high-uncertainty regions. Consequently, the optimizer disproportionately updates the critic toward these noisy targets, effectively transforming stochastic noise into systemic overestimation.

\textbf{Why is this a problem?} The fundamental flaw of heuristic weighting based purely on $|\delta|$ is its inability to distinguish between learning progress and stochastic noise. In the nascent stages of training or in complex environments with high epistemic uncertainty, large TD-errors are frequently artifacts of function approximation limits or aleatoric noise rather than meaningful Bellman residuals. By reinforcing these ``hard" samples, the agent triggers a positive feedback loop: overestimation leads to higher prioritization, which in turn leads to further overestimation. This instability prevents the critic from grounding the value surface, ultimately degrading policy optimization.

\begin{definition}[Collaborative Weighting Value Learning]
\label{defi:cwvl}
Consider the gradient of the collaborative loss function $\mathcal{L}_{\mathcal{Z}}^{\omega,\xi}$ with respect to the parameters $\theta$:
\begin{equation}
\label{cwac:eq}
\mathcal{L}_{\mathcal{Z}}^{\omega,\xi}(\theta)
= \mathbb{E}_{(s,a)\sim \mathcal{B},s^{'}\sim \rho_\pi,a^{'}\sim \pi_{\phi}}
\left[
\omega(s,a)\cdot \,
\mathrm{Huber}\left(
Q_\theta(s,a) - y\
\right)+\xi(s,a)\cdot\sigma(s,a)
\right],
\end{equation}
where $y$ denotes the bootstrapped learning target. To improve robustness against noisy value estimates, the critic is optimized using the Huber loss~\citep{huber1992robust} rather than the standard MSE objective. The target value function $Q_{\theta'}$ is further replaced by its stochastic pessimistic counterpart $\mathcal{Z}_{\theta'}$:
\begin{equation}
    y= r + \gamma \left[ \min_{i=1,2}Z_{\theta^{'}_i}(s', a') - \alpha \log \pi_{\phi}(a' \mid s')\right], a' \sim \pi_{\phi}(\cdot \mid s').
\end{equation}

We define the TD-error as $\delta(s,a)=Q_\theta(s,a) - y$, based on which we further construct uncertainty-aware and TD-error-driven weighting functions
\begin{equation}
\label{eq:weights}
\omega(s,a) = \left( \frac{\mathbb{E}[\sigma_{\theta}(s,a)]}{  \sigma_{\theta}(s,a)+c  } \right)^{\beta_\omega},
\quad
\xi(s,a) = \left( \frac{\mathbb{E}[\mid \delta(s,a)\mid]}{\mid \delta(s,a) \mid+c} \right)^{\beta_\xi},
\end{equation}
where $\beta_\omega$ and $\beta_\xi$ control the sharpness of uncertainty and TD-error reweighting, respectively, and $c$ is a small constant for numerical stability.
 
\end{definition}

\begin{intuition}
The corresponding gradient is given by
$\nabla_\theta \mathcal{L}_{\mathcal{Z}}^{\omega,\xi}
= \mathbb{E}_{s \sim \mathcal{B}, a \sim \pi_{\phi}}
[
2\omega \cdot \delta \cdot \nabla_\theta Q_\theta
$\quad $+
\xi \cdot \nabla_\theta \sigma_\theta
]$.
The resulting optimization dynamics are adaptively governed by reciprocal interactions between the weighting terms and the learning targets, establishing a critical equilibrium between overestimation suppression and efficient value learning.

\textbf{Uncertainty-Aware Value Regularization.}
In high-noise or unvisited regions where epistemic uncertainty $\sigma_{\theta} \uparrow$, the weighting coefficient $\omega \downarrow$. This mechanism introduces a principled adaptive pessimism: it suppresses the recursive amplification of overestimation bias by attenuating the influence of unreliable, high-variance samples on the value surface $Q_\theta$.

\textbf{Error-Conditioned Uncertainty Optimization.}
 When the model encounters significant Bellman residuals $|\delta| \uparrow$, the weight $\xi \downarrow$. This serves as a vital safeguard against excessive pessimism: by preventing the premature minimization of $\sigma_\theta$ (i.e., avoiding over-confident uncertainty collapse), the agent maintains a sufficiently diverse uncertainty manifold. This ensures that exploratory gradients are not prematurely extinguished, allowing the agent to continue gathering informative transitions until the value estimate $Q_\theta$ is sufficiently grounded.

\textbf{Gradient Synthesis and Stability.}
As learning converges ($|\delta| \downarrow, \sigma_{\theta}\downarrow$), the weights reciprocally increase ($\xi \uparrow, \omega \uparrow$), facilitating fine-grained refinement of the value surface. This dual-track regulation ensures that the total gradient norm remains bounded and is preferentially oriented toward state-action regions with high statistical confidence, achieving a robust trade-off between conservative value estimation and active exploration.
\end{intuition}

\begin{definition}[Mitigating Maximization Bias in Actor Updates]
\label{def:cwac_actor}
Based on the stochastic pessimistic sampling in Eq.~\eqref{eq:pessimistic_q}, we further incorporate pessimism into policy improvement. The actor's objective is optimized by maximizing
\begin{equation}
\label{eq:actor_loss}
\begin{aligned}
J_{\pi}(\phi)
= \mathbb{E}_{s \sim \mathcal{B}, a \sim \pi_\phi}
\Big[
  \mathcal{Z}_\theta(s,a) - \alpha \log \pi_\phi(a|s)
\Big].
\end{aligned}
\end{equation}

\end{definition}

\begin{figure}[t]
\centering
\begin{minipage}{0.95\linewidth}
\begin{algorithm}[H]
  \centering
    \caption{Collaborative Weighting Actor-Critic (CWAC)}
    \label{alg:CWAC_online}
    \begin{algorithmic}
      \STATE Initialize critic networks $Q_{\theta_1}$, $Q_{\theta_2}$ and actor-network $\pi_\phi$ with random parameters
      \STATE Initialize target networks $\theta^{'}_1 \leftarrow \theta_1$, $\theta^{'}_2 \leftarrow \theta_2$ and replay buffer $\B$
      \FOR{$t = 1$ to $T$}
        \STATE Calculate action $a \sim \pi_\phi(\cdot|s) $
        \STATE Get reward $r$ and new state $s^{'}$ and store transition tuple $(s, a, r, s^{'})$ in $\B$
        \STATE Sample mini-batch of transitions $(s, a, r, s^{'}) \sim \B$
        \STATE Get target value: $y= r + \gamma \left[ \min_{i=1,2}Z_{\theta^{'}_i}(s', a') - \alpha \log \pi_{\phi}(a' \mid s')\right], a' \sim \pi_{\phi}(\cdot \mid s')$ 
        \STATE Update critic with gradient descent by minimizing Eq.~\eqref{cwac:eq}
         \STATE Stochastic pessimistic value estimation via Eq.~\eqref{eq:pessimistic_q}
        \STATE Update actor with gradient ascent by maximizing Eq.~\eqref{eq:actor_loss}
        \STATE Update weights: $\theta^{'}_i \leftarrow  \tau \theta_i + (1 - \tau) \theta^{'}_i, i\in  1,2 $
      \ENDFOR
\end{algorithmic}
\end{algorithm}
\end{minipage}
\end{figure}

We present the detailed learning procedure of the proposed CWAC in Algorithm~\ref{alg:CWAC_online}. In contrast to~\citep{fujimoto2020equivalence}, our method does not rely on prioritized sampling over historical interactions. Instead of modifying the sampling distribution, it incorporates a pessimistic value sampling strategy together with a collaborative weighting mechanism that jointly and adaptively calibrates the influence of TD-errors and predictive uncertainty during critic optimization. Compared to the vanilla SAC algorithm, these modifications provide a systematic approach to mitigating overestimation bias in off-policy RL.

\section{Experiments}
\label{sec:experiment}
Our experiments are designed to address the following questions:
(1) Does CWAC achieve consistent and stable performance improvements across diverse environments? 
(2) How does the collaborative weighting mechanism facilitate Bellman error minimization and stabilize value learning? 
(3) Can CWAC stabilize value estimation under non-stationary reward perturbations?

To answer (1), we evaluate CWAC on 6 continuous control tasks from OpenAI Gym~\citep{brockman2016openai}, 4 tasks from PyBullet~\citep{coumans2016pybullet} and 12 tasks from DeepMind Control (DMC)~\citep{tassa2018deepmind} using a fixed set of hyperparameters.
For (3), we evaluate performance and TD-error dynamics in environments with stochastic reward noise.

\subsection{Experimental Setup}
\textbf{Hyperparameters.}
In Section~\ref{sec:method}, we introduce three key hyperparameters: the pessimism coefficient $\mu = 0.8$, the uncertainty weighting exponent $\beta_\omega = 2$, and the TD-error reweighting exponent $\beta_\xi = 1$. 
Unless otherwise specified, these hyperparameters are fixed across all experiments.
Our model architecture follows the standard design of SAC.
Further details can be found in Table~\ref{all_hyper}.

\textbf{Baselines.}
We compare CWAC against several representative actor-critic baselines, including value-improved actor-critic method VIAC~\citep{oren2025value}, the decision augmentation approach ALH~\citep{quangaugmenting}, the prioritized sampling method LAP~\citep{fujimoto2020equivalence}, the maximum-entropy actor-critic algorithm SAC~\citep{haarnoja2018soft}, and the deterministic policy gradient method TD3~\citep{fujimoto2018addressing}.

\subsection{Evaluation Environments}
We evaluate the effectiveness of the proposed algorithm on 6 Gym environments, 4 PyBullet environments, and 12 DMC benchmarks. Detailed descriptions of these experimental environments are provided below.

\begin{table}[!htbp] 
  \setlength{\tabcolsep}{1.0mm}
 \caption{State and action space dimensions of the continuous control environments used in experiments: Gym tasks (a), PyBullet tasks (b), and DMC tasks (c).}
  
  \begin{minipage}[b]{1\textwidth}
  \centering
  \resizebox{0.9\textwidth}{!}{
  \begin{tabular}{lllllll}
  \toprule
 Environment & Task & State Dim. & Action Dim. & Task & State Dim. & Action Dim.    \\
  \midrule
  \multirow{3}{*}{Gym}  &HalfCheetah &17&6& Ant  &111&8   \\
  &Hopper &11&3& Walker2d  &17&6   \\
  &Humanoid &376&17& BipedalWalker  &24&4   \\
  \bottomrule  
\end{tabular}
}
  \caption*{(a) }
  \end{minipage}
  \begin{minipage}[b]{1.0\textwidth}
  \centering
  \resizebox{1.\textwidth}{!}{
  \begin{tabular}{lllllll}
  \toprule
  Environment & Task & State Dim. & Action Dim. & Task & State Dim. & Action Dim.     \\
  \midrule
  \multirow{2}{*}{PyBullet}  &HalfCheetahBulletEnv &26&6& AntBulletEnv  &28&8   \\
  &HopperBulletEnv &15&3& Walker2DBulletEnv  &22&6   \\
     \bottomrule
  \end{tabular}
  }
  \caption*{(b)}
  \end{minipage}
  \begin{minipage}[c]{1\textwidth}
    \centering
    \resizebox{1.\textwidth}{!}{
    \begin{tabular}{lllllll}
    \toprule
    Environment & Task & State Dim. & Action Dim. & Task & State Dim. & Action Dim.    \\
    \midrule
    \multirow{6}{*}{DMC}  &reacher-hard &6&2& walker-walk  &24&6   \\
    &hopper-hop &15&4& hopper-stand  &15&4   \\
    &fish-swim &24&5& swimmer-swimmer6  &25&5   \\
    &pendulum-swingup &3&1& cheetah-run  &17&6   \\
    &walker-run &24&6& quadruped-walk  &78&12   \\
    &quadruped-run &78&12& finger-turn\_hard &12&2   \\
       \bottomrule
    \end{tabular}
    }
    \caption*{(c)}
    \end{minipage}
    
  \label{state_action_dims}
  \end{table}

\textbf{Gym.}
Gym is a fast and powerful physics engine that has become a standard platform for simulating complex robotic dynamics and control tasks. It is extensively utilized in the robotics and RL communities for training agents to perform tasks such as locomotion and manipulation, and it offers a diverse set of benchmarks for evaluating RL algorithms. 
    In our experiments, we adopt 6 challenging tasks from the Gym environments: HalfCheetah, Ant, Hopper, Walker2d, Humanoid, and BipedalWalker.

\textbf{PyBullet.}     
PyBullet is an open-source physics engine that supports general-purpose physical simulation as well as high-fidelity modeling for robotics tasks. It provides an efficient control interface for fast simulation of rigid-body dynamics, collision detection, and various control problems. Compared to MuJoCo, PyBullet typically exhibits higher levels of environment-induced dynamical noise, making it a more challenging testbed for evaluating the robustness of reinforcement learning algorithms. In this work, we evaluate the proposed algorithm on four PyBullet benchmark tasks: HalfCheetahBulletEnv, AntBulletEnv, HopperBulletEnv, and Walker2DBulletEnv.

\textbf{DMC.}
The DeepMind Control (DMC) suite is a set of continuous control environments mainly designed for robotic manipulation tasks, but it also covers a wide range of industrial control challenges. In our experiments, we evaluated the proposed algorithm on four continuous control tasks from the DMC suite: reacher-hard, walker-walk, hopper-hop, hopper-stand, fish-swim, swimmer-swimmer6, pendulum-swingup, cheetah-run, walker-run, quadruped-walk, quadruped-run, and finger-turnhard task.

\subsection{Performance Evaluation on Gym Environments}
We evaluate CWAC against several representative baselines on 6 continuous control tasks from the Gym benchmark suite. Learning curves are shown in Figure~\ref{fig:Over_results}, with final performance summarized in Table~\ref{tab:mojoco_res}.

Overall, CWAC demonstrates consistent and superior performance over VIAC, ALH, and LAP across all 6 Gym tasks. On the Ant task, VIAC exhibits a clear performance degradation after 1.5M training steps, which we attribute to its aggressive, greedy value improvement scheme that tends to amplify noise-dominated updates and induce non-stationary training dynamics. The ALH method constructs actions by combining representations from the previous and current timesteps, which may lead to temporally correlated actions and thus limit performance gains. 
LAP yields only marginal improvements over SAC and even shows performance degradation on Ant and BipedalWalker, suggesting that TD-error-based prioritized sampling may overemphasize noisy transitions, potentially resulting in suboptimal performance. In contrast, CWAC systematically mitigates overestimation bias through uncertainty-aware weighting and pessimistic value estimation, resulting in more stable and efficient performance improvements.

As shown in Table~\ref{tab:mojoco_res}, CWAC achieves the best overall performance among all benchmark methods, including VIAC, ALH, LAP, SAC, and TD3, with relative improvements of 15.3\%, 31.5\%, 29.5\%, 40.3\%, and 32.9\%, respectively, in average evaluation metrics.
Moreover, CWAC demonstrates consistent performance gains across all 6 Gym tasks, indicating that the proposed collaborative weighting mechanism enables more stable Q-value estimation and provides more reliable guidance for policy improvement.
\begin{figure*}[ht]
\begin{center}
\centerline{\includegraphics[width=\textwidth]{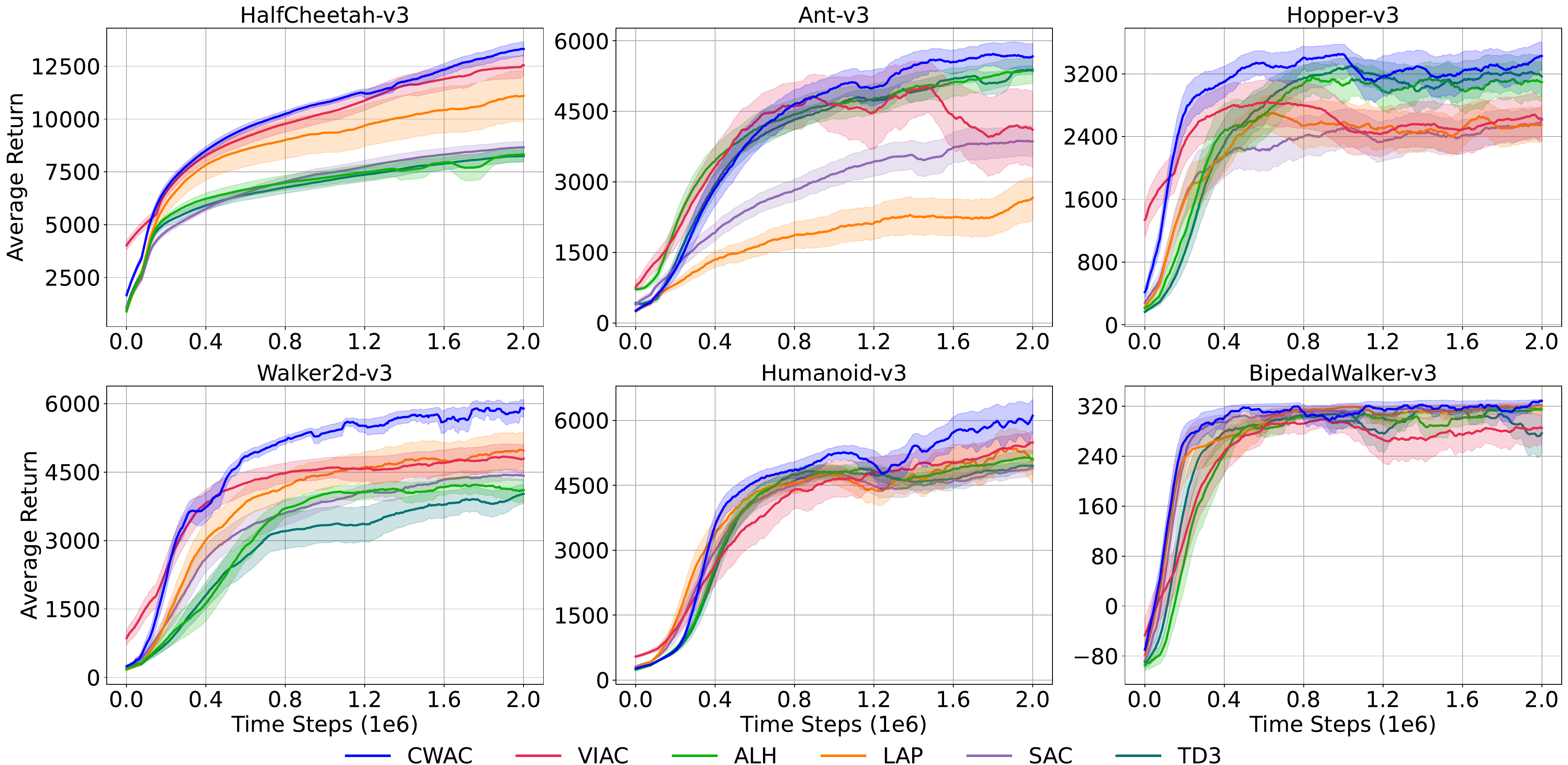}}
\caption{Learning curves on 6 Gym tasks. The shaded region represents half a standard deviation of the average evaluation over 10 trials. Curves are smoothed uniformly for visual clarity.}
\label{fig:Over_results}
\end{center}
\vskip -0.2in
\end{figure*}

\begin{table*}[!ht]
\centering
\setlength\tabcolsep{2.5pt}
\caption{Average return of the last 10 evaluation scores over 10 random seeds. The maximum values of each row are bolded. $\pm$ corresponds to a standard deviation over trials. }
  \resizebox{1.\textwidth}{!}{
\begin{tabular}{lcccccc}
\toprule
Tasks        &CWAC&VIAC&ALH&LAP&SAC&TD3     \\  \midrule
HalfCheetah-v3&\bf 13308$\pm$503  &12571$\pm$735&8339$\pm$484&11085$\pm$1813&8672$\pm$399& 8259$\pm$430  \\
Ant-v3&\bf 5764$\pm$356&4186$\pm$1208&5361$\pm$427&2687$\pm$672&3906$\pm$508& 5397$\pm$134  \\
Hopper-v3&\bf 3404$\pm$286&2517$\pm$466 &3130$\pm$367&2561$\pm$334 &2552$\pm$320&3216$\pm$372 \\
Walker2d-v3&\bf 5768$\pm$349 &4936$\pm$338&4159$\pm$303&4958$\pm$559&4419$\pm$172 &4008$\pm$322 \\
Humanoid-v3&\bf 6184$\pm$566 &5635$\pm$310&5117$\pm$280&5224$\pm$696&4892$\pm$230&5003$\pm$169  \\
BipedalWalker-v3&\bf 328$\pm$4  &290$\pm$34&315$\pm$4&321$\pm$10&322$\pm$4&265$\pm$57 \\
 \midrule
Avg.& 5792 &5022&4403&4472&4127&4358 \\
\bottomrule
\end{tabular}
 }
\label{tab:mojoco_res}
\end{table*}

\subsection{Performance on PyBullet Environments}

We evaluate CWAC and all baseline algorithms on the PyBullet benchmark suite. Each algorithm is independently trained with 10 random seeds for 2 million environment interaction steps. The learning curves are shown in Figure~\ref{fig:Over_results_pybullet_cwac}, and the final performance results are reported in Table~\ref{tab:mojoco_res_pybullet_cwac}.
\begin{figure*}[ht]
  \begin{center}
  \centerline{\includegraphics[width=1.0\textwidth]{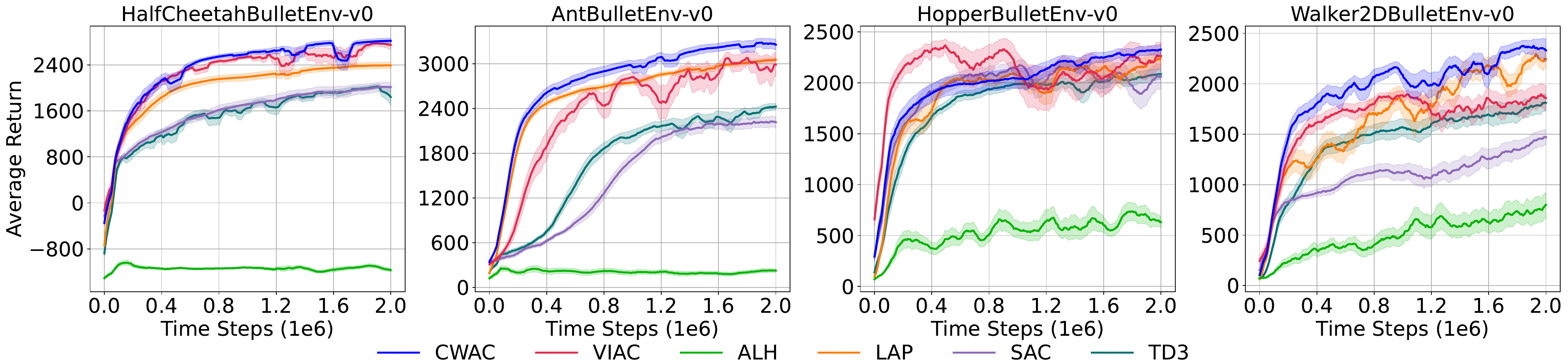}}
\caption{Learning curves on four PyBullet tasks. The shaded areas represent half of the standard deviation of the average evaluation returns over 10 independent trials. For better visualization, all curves are uniformly smoothed.}
  \label{fig:Over_results_pybullet_cwac}
  \end{center}
  \vskip -0.2in
  \end{figure*}

As shown in Figure~\ref{fig:Over_results_pybullet_cwac}, CWAC achieves faster policy convergence and superior performance compared with all baseline methods across the four tasks. Specifically, LAP obtains only marginal improvements on HalfCheetahBulletEnv, AntBulletEnv, and Walker2DBulletEnv, indicating that prioritized experience replay improves sample utilization efficiency but provides limited benefits for policy improvement. VIAC achieves higher returns on several tasks by adopting a more aggressive value estimation and value-guided optimization mechanism. However, such aggressive value updates may exacerbate value overestimation bias, thereby compromising training stability.
In contrast, CWAC consistently demonstrates stable and significant performance gains across all four tasks. These results suggest that the uncertainty-aware value weighting mechanism effectively mitigates the misleading effects of overestimated states during policy optimization. By leveraging valuable value information while reducing the adverse impact of value error accumulation and propagation, CWAC enables a more robust policy learning process.
Furthermore, as an extension of TD3, ALH underperforms the standard TD3 across all four PyBullet tasks. We hypothesize that this degradation may be attributed to introducing the implicit representation of the previous observation as an additional input to the actor network. Such a design encourages the policy to generate more similar actions for adjacent states, thereby reducing behavioral diversity and exploration capability during policy optimization, ultimately limiting further performance improvement.

\begin{table*}[!ht]
  \centering
  \setlength\tabcolsep{2.5pt}
      \caption{Average evaluation returns over the final 10 evaluation episodes on PyBullet environments across 10 random seeds. The maximum values of each row are bolded. $\pm$ corresponds to a standard deviation over trials.}
   \resizebox{1.\textwidth}{!}{
  \begin{tabular}{lcccccc} 
  \toprule

 Tasks      &CWAC&VIAC&ALH&LAP&SAC&TD3  \\  \midrule
  HalfCheetahBulletEnv-v0&\bf 2818$\pm$46  &2741$\pm$77& -1174$\pm$31&2398$\pm$56&2015$\pm$82& 1820$\pm$160\\
  AntBulletEnv-v0&\bf 3249$\pm$74 &3017$\pm$149&224$\pm$25&3051$\pm$32&2216$\pm$76& 2423$\pm$46 \\
  HopperBulletEnv-v0&\bf 2326$\pm$51& 2255$\pm$136 & 640$\pm$58&2243$\pm$79& 2075$\pm$124&2082$\pm$52 \\
  Walker2DBulletEnv-v0&\bf 2325$\pm$119& 1845$\pm$132 & 800$\pm$128&2234$\pm$88&1480$\pm$76& 1819$\pm$82\\
  \midrule
  Avg.& 2680 &2465&123&2482&1947&2036 \\
 \bottomrule
  \end{tabular}
}
  \label{tab:mojoco_res_pybullet_cwac}
  \end{table*}
From Table~\ref{tab:mojoco_res_pybullet_cwac}, we observe that SAC-based CWAC, VIAC, and LAP consistently outperform the original SAC on all four PyBullet continuous control tasks. Specifically, CWAC, VIAC, and LAP improve the average return over SAC by 37.6\%, 26.6\%, and 27.5\%, respectively. These results demonstrate that incorporating value information into policy optimization can effectively improve both learning efficiency and final performance.

 \subsection{Performance Evaluation on DMC Environments}
 \label{appendx:dmc}
We conduct experiments on the DMC benchmark suite, where each algorithm is trained for 500K environment steps with 10 random seeds. The aggregated learning curves are presented in Figure~\ref{fig:results_dmc}, and the final performance comparisons are summarized in Table~\ref{dmc_results}. 

Across the 12 evaluated tasks, CWAC achieves the SOTA performance on 8 tasks, highlighting its robustness and effectiveness in diverse continuous control scenarios. In particular, CWAC built upon SAC consistently demonstrates superior and more stable performance compared with the vanilla SAC. These improvements suggest that the proposed collaborative weighting and pessimistic value estimation mechanisms effectively alleviate value estimation errors, leading to improved sample efficiency and more reliable policy optimization.

\begin{figure*}[ht]
\begin{center}
\centerline{\includegraphics[width=\textwidth]{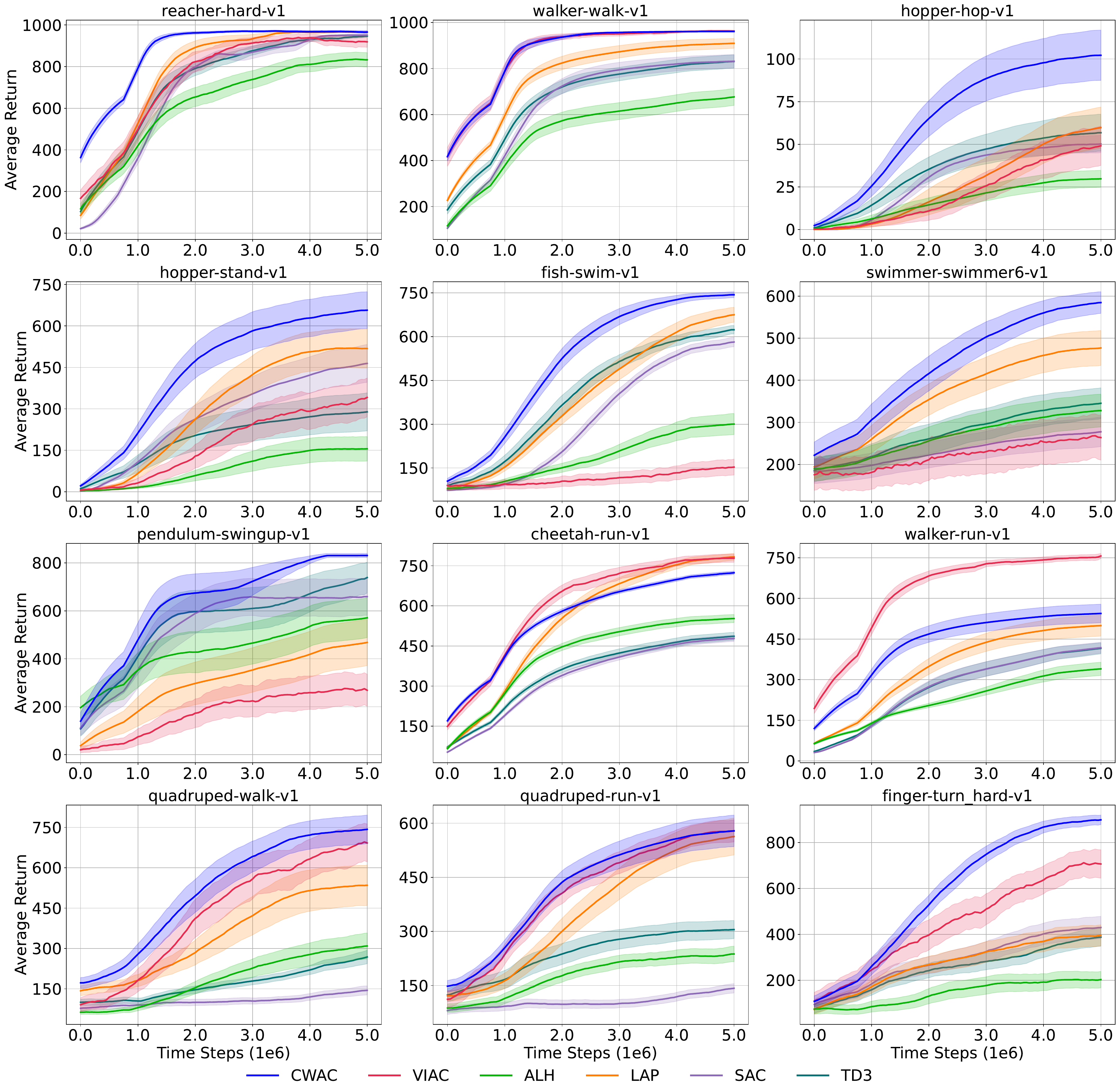}}
\caption{Learning curves on 12 DMC tasks. The shaded region represents half a standard deviation of the average evaluation over 10 trials. Curves are smoothed uniformly for visual clarity.}
\label{fig:results_dmc}
\end{center}
\vskip -0.2in
\end{figure*}

  \begin{table*}[!ht]
    \centering
    \setlength\tabcolsep{3.0pt}
          \caption{Average evaluation returns over the final 10 evaluation episodes on DMC environments across 10 random seeds. The maximum values of each row are bolded. $\pm$ corresponds to a standard deviation over trials.}
           \resizebox{1.\textwidth}{!}{
    \begin{tabular}{lcccccc} 
    \toprule
  
   Tasks      &CWAC&VIAC&ALH&LAP&SAC&TD3   \\  \midrule
   reacher-hard-v1 & \bf 967$\pm$7 & 904$\pm$33 & 824$\pm$36 & 966$\pm$4 & 953$\pm$3 & 947$\pm$5 \\
walker-walk-v1 & 961$\pm$2  & \bf 968$\pm$2 & 685$\pm$38 & 914$\pm$21 & 835$\pm$28 & 834$\pm$30 \\
hopper-hop-v1 & \bf 103$\pm$15  & 55$\pm$12 & 30$\pm$5 & 62$\pm$12 & 50$\pm$5 & 58$\pm$11 \\
hopper-stand-v1 & \bf 660$\pm$67 & 357$\pm$69 & 155$\pm$43 & 526$\pm$69 & 471$\pm$69 & 295$\pm$70 \\
fish-swim & \bf 748$\pm$9  & 162$\pm$31 & 310$\pm$38 & 689$\pm$23 & 593$\pm$13 & 628$\pm$15 \\
swimmer-swimmer6 & \bf 590$\pm$25  & 268$\pm$53 & 333$\pm$39 & 478$\pm$42 & 284$\pm$35 & 348$\pm$37 \\
pendulum-swingup & \bf 832$\pm$8 & 276$\pm$67 & 580$\pm$82 & 483$\pm$ 96 & 659$\pm$78 & 752$\pm$63 \\
cheetah-run & 728$\pm$7 & 781$\pm$16 & 556$\pm$15 & \bf  788$\pm$13 & 482$\pm$11 & 490$\pm$14 \\
walker-run & 547$\pm$35 & \bf  761$\pm$9 & 345$\pm$25 & 504$\pm$39 & 422$\pm$19 & 422$\pm$19 \\
quadruped-walk & \bf  744$\pm$53 & 729$\pm$69 & 315$\pm$50 & 536$\pm$75 & 156$\pm$18 & 281$\pm$28 \\
quadruped-run & 584$\pm$44 & \bf  590$\pm$28 & 243$\pm$22 & 571$\pm$51 & 146$\pm$12 & 308$\pm$25 \\
finger-turn\_hard & \bf  909$\pm$17 & 719$\pm$67 & 216$\pm$37 & 399$\pm$46 & 432$\pm$48 & 395$\pm$41 \\

    \midrule
     
    Avg.& 698 &548&383&576&456& 479 \\
   \bottomrule
    \end{tabular}
     }
     \label{dmc_results}
\end{table*}

\subsection{Integration with More Off-Policy Algorithms}
To further validate the generality of the proposed CWAC algorithm, we integrate CWAC with two representative off-policy value-based algorithms, namely TD3 and DDPG, and conduct corresponding experimental studies. The experiments are carried out on six Gym continuous control environments, with each environment independently run for 10 trials. All algorithms adopt a unified hyperparameter configuration and identical experimental conditions to ensure a fair comparison. Figure~\ref{fig:results_gym_value_based} presents the learning curve comparisons among CWAC(TD3), CWAC(DDPG), and the original TD3 and DDPG algorithms.
\begin{figure}[!htbp]
  \begin{center}
  \centerline{\includegraphics[width=1.0\textwidth]{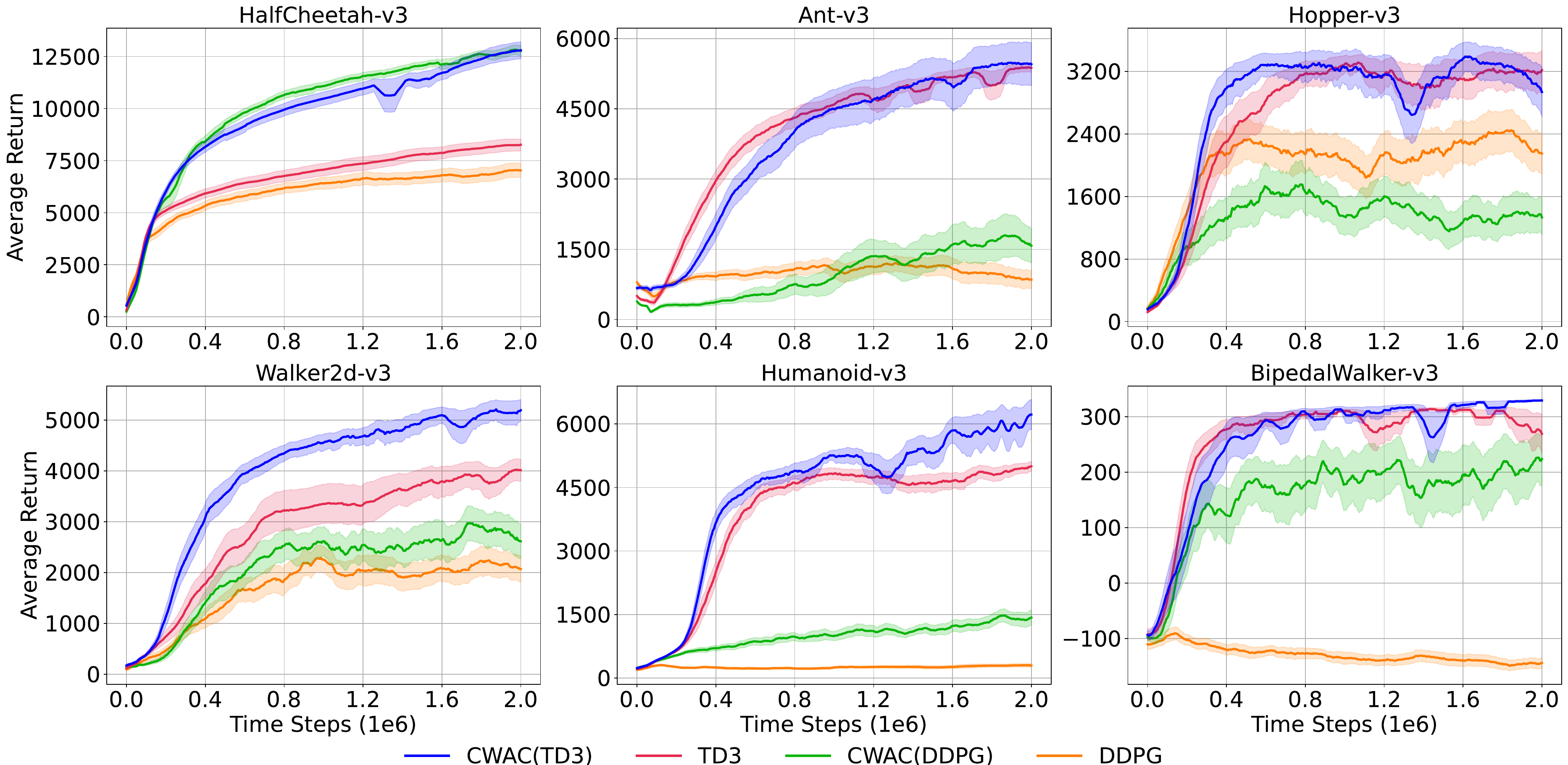}}
\caption{Learning curves of CWAC, TD3, and DDPG on six Gym continuous control tasks. Shaded regions denote half a standard deviation of the average evaluation over 10 trials. All curves are uniformly smoothed for visual clarity.}
  \label{fig:results_gym_value_based}
  \end{center}
  
  \end{figure}

As shown in Figure~\ref{fig:results_gym_value_based}, integrating CWAC with TD3 consistently improves learning performance across most evaluated tasks. Compared with the original TD3, CWAC(TD3) achieves significant performance gains on three out of six benchmark environments, while maintaining comparable performance on the remaining tasks without introducing performance degradation. Specifically, on the HalfCheetah, Walker2d, and Humanoid environments, CWAC(TD3) demonstrates faster convergence and achieves higher final cumulative returns. Moreover, CWAC(DDPG) also yields substantial improvements on these three tasks, further validating the effectiveness and generalizability of the proposed method across different value-based off-policy algorithms.
Interestingly, we observe that vanilla DDPG fails to achieve stable convergence on the BipedalWalker task, whereas CWAC(DDPG) exhibits consistent performance improvements throughout training. This result suggests that the proposed method can effectively alleviate unstable optimization issues in value-based policy learning.
Overall, these empirical results demonstrate that the collaborative weighting mechanism introduced in CWAC effectively improves the utilization efficiency of high-value experience samples in off-policy reinforcement learning.
Through the adaptive coordination of TD-errors and predictive uncertainty during optimization, CWAC effectively balances the influence of different training samples, leading to more stable policy learning, higher sample efficiency, and improved final policy performance.
These findings indicate that CWAC is not limited to maximum-entropy reinforcement learning frameworks, but can also serve as a general enhancement module for a broader class of policy gradient algorithms.

\subsection{Sensitivity Analysis of Core Hyperparameters}
 \label{appendx:abla}
CWAC introduces three hyperparameters to control the strength of stochastic sampling $\mu$, the uncertainty weighting coefficient $\beta_\omega$, and the temporal-difference error weighting coefficient $\beta_\xi$, respectively. To evaluate the sensitivity of these hyperparameters, we conduct experiments on the Walker2d task, where each configuration is trained for 2 million time steps using 10 different random seeds, and analyze the resulting training dynamics under varying parameter settings.

\begin{figure}[!htbp]
\begin{center}
\centerline{\includegraphics[width=1.0\textwidth]{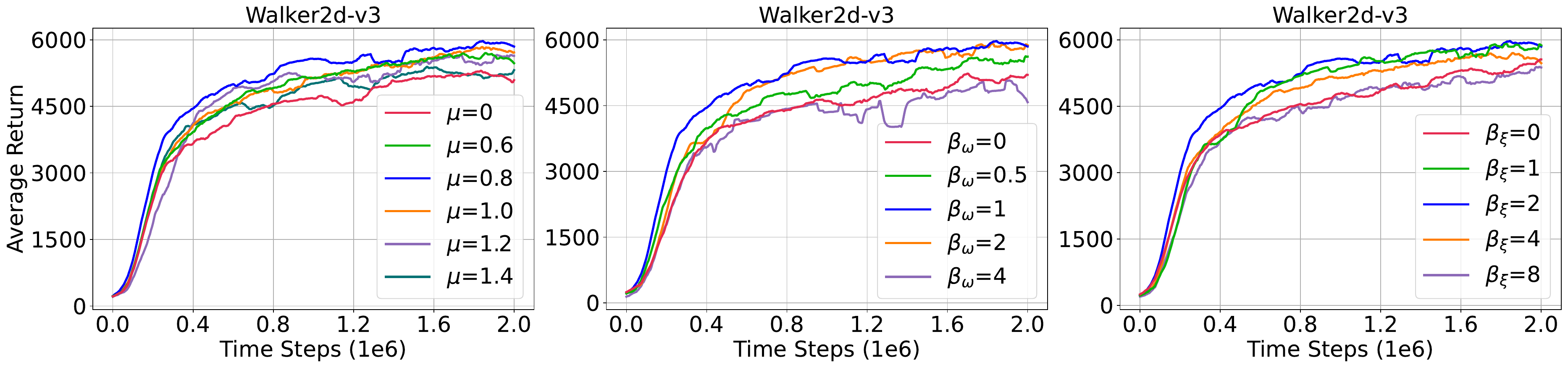}}
\caption{Hyperparameter sensitivity analysis. The shaded region represents half a standard deviation of the average evaluation over 10 trials. Curves are smoothed uniformly for visual clarity.}
\label{fig:results_ablation_mu_beta}
\end{center}

\end{figure}
\textbf{Pessimistic value coefficient $\mu$.}
The parameter $\mu$ controls the strength of stochastic pessimism, as described in Section~\ref{sec:method}. In this study, we set $\beta_\omega=1$ and $\beta_\xi=2$ as the default setting. When $\mu=0$, the stochastic pessimistic sampling mechanism is disabled. 
When $\mu$ lies within the interval $[0.6, 1.2]$, CWAC demonstrates consistently stable performance improvements across different choices of $\mu$, as shown in Figure~\ref{fig:results_ablation_mu_beta} (left). However, overly large values of $\mu$ may amplify estimation bias and introduce training instability, as observed for CWAC with $\mu=1.4$.

\textbf{Uncertainty coefficient $\beta_\omega$.}
The coefficient $\beta_\omega$ controls the weighting of TD-error according to the uncertainty in the Q-value estimates, directly modulating each sample’s contribution. When $\beta_\omega=0$, no uncertainty weighting is applied, resulting in a notable performance drop compared to settings with $\beta_\omega \in [0.5, 2]$, as shown in Figure~\ref{fig:results_ablation_mu_beta} (center). As $\beta_\omega$ increases, TD-errors associated with high uncertainty are increasingly down-weighted, while those with low uncertainty are correspondingly up-weighted, as described in Section~\ref{sec:method}. However, further increasing $\beta_\omega$ (e.g., to 4) amplifies the overall fluctuations of TD-errors, which in turn leads to unstable performance.

\textbf{TD-error coefficient $\beta_\xi$.}
Similarly, as shown in Figure~\ref{fig:results_ablation_mu_beta} (right), the weighting of samples with high TD-error is suppressed, while that of samples with low TD-error is increased. As $\beta_\xi$ increases, the variability of sample-wise error uncertainty is amplified, which can adversely affect training stability and lead to degraded performance.

\subsection{Value Estimation Error Visualization}
To further investigate the training stability of the proposed method, we visualize the Q-value estimation error $Q(s, a)-G^\pi(s, a)$ over 2M training steps across 6 continuous control tasks. 
The empirical results are illustrated in Figure~\ref{fig:results_q_value}. 

\begin{figure*}[ht]
\begin{center}
\centerline{\includegraphics[width=\textwidth]{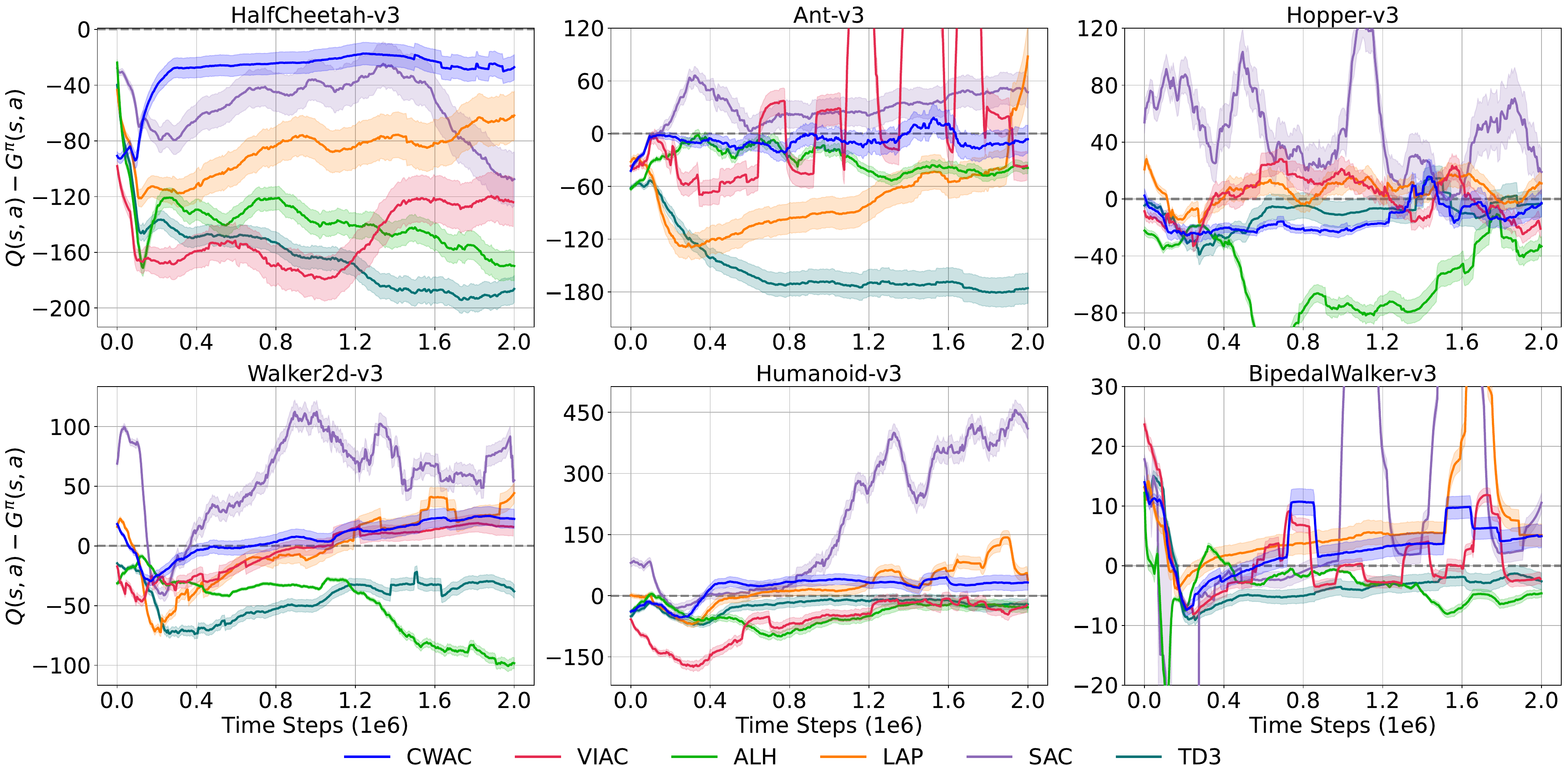}}
\caption{Value estimation error on 6 Gym tasks.
The shaded region represents half a standard deviation of the average evaluation over 10 trials. Curves are smoothed uniformly for visual clarity. The black dashed line indicates the training target of the value function.}
\label{fig:results_q_value}
\end{center}
\vskip -0.2in
\end{figure*}
 
Overall, CWAC exhibits consistently smoother and more stable TD-error convergence across all benchmarks, indicating improved stability in critic optimization. In contrast, conventional off-policy baselines such as SAC and TD3 show elevated TD-error during the early stages of training, accompanied by pronounced stochastic oscillations, which suggest unstable value estimation.
LAP reduces TD-error relative to SAC, particularly in high-dimensional tasks such as HalfCheetah, Walker2d, and Humanoid. However, its prioritized sampling strategy may overemphasize noisy transitions, potentially amplifying aleatoric noise and inducing distributional shift, which can introduce estimation bias and destabilize policy updates.
We observe that ALH exhibits pronounced value underestimation on the Hopper and BipedalWalker tasks compared to TD3. This phenomenon can be attributed to its design, as ALH introduces a decision augmentation mechanism with hypothesis modeling, building upon TD3’s value smoothing to further increase conservatism in value estimation. While this design improves the stability of policy updates, it reduces adaptability to new data, thereby inducing systematic underestimation in value predictions.
For VIAC, we observe pronounced TD-error fluctuations on the Ant task, which correlate with its performance degradation. This instability likely stems from its aggressive value improvement operator, potentially compromising the consistency of the learning dynamics.
In contrast, CWAC maintains low-variance and well-behaved TD-error trajectories throughout training, effectively balancing conservative estimation with learning efficiency.

\subsection{Value Learning in Noisy Reward Environments}
To investigate the robustness of CWAC under non-stationary reward perturbations, we compare its value learning dynamics against the vanilla SAC algorithm on the HalfCheetah, Ant, and Hopper tasks. Specifically, we introduce stochastic perturbations to the reward signal during training by injecting Gaussian noise with a positive mean ($\epsilon_r \sim \mathcal{N}(1, 0.2)$) into the immediate reward $r$. The noise is applied with a frequency of 2\%, i.e., once every 50 environment interactions. The resulting transitions are stored in the replay buffer $\mathcal{B}$.
Validation performance and estimation error are compared under noisy and noise-free rewards, with the complete learning dynamics depicted in Figure~\ref{fig:results_gym_noise_value}.

\begin{figure}[!htbp]
\begin{center}
\centerline{\includegraphics[width=1.0\textwidth]{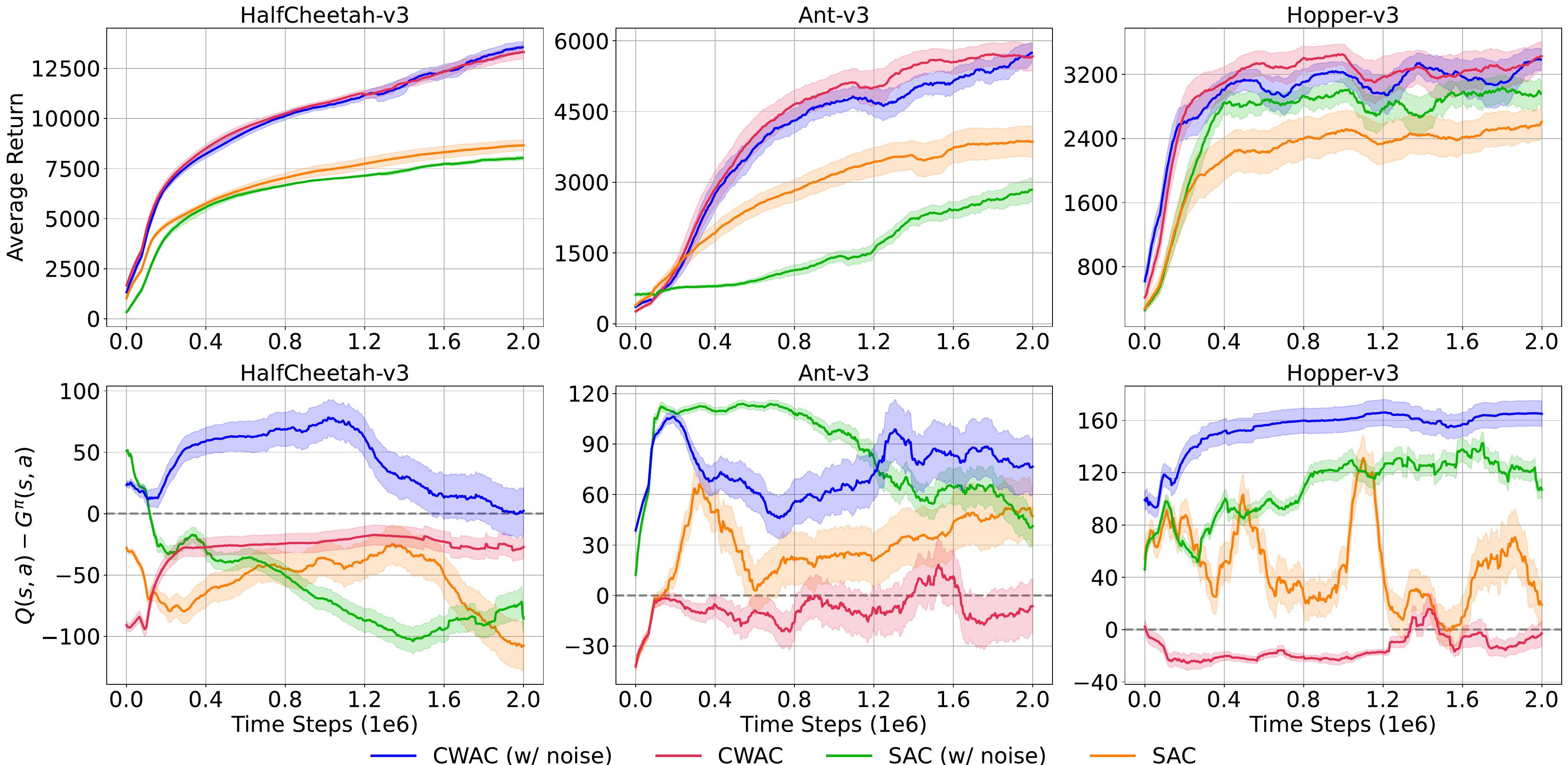}}
\caption{Comparison of evaluation performance and estimation error in noisy-reward and noise-free environments. The shaded region represents half a standard deviation of the average evaluation over 10 trials. Curves are smoothed uniformly for visual clarity. The black dashed line indicates the training target of the value function.}
\label{fig:results_gym_noise_value}
\end{center}

\end{figure}
From the results, we observe that CWAC (w/ noise) exhibits a significant increase in TD-error across all three tasks, while its performance remains comparable to the noise-free CWAC variant. In contrast, SAC (w/ noise) shows degraded performance on the HalfCheetah and Ant tasks compared to its noise-free counterpart, suggesting that non-uniform reward noise induces non-stationarity in the value signal, which leads to suboptimal policy performance.
Interestingly, on the Hopper task, SAC (w/ noise) outperforms its noise-free baseline. This suggests that overestimation in value estimation may encourage exploration, leading to improved performance.
Furthermore, we observe that CWAC (w/ noise) exhibits smoother TD-error trajectories than SAC (w/ noise), demonstrating robustness to noisy reward signals. In contrast, SAC (w/ noise) shows pronounced estimation error oscillations during the early stages, particularly on the HalfCheetah and Ant tasks, indicating that the injected noise introduces non-stationarity into value estimates, which slows policy convergence.

\begin{table*}[!htbp]
\setlength\tabcolsep{2.5pt}
\centering
\caption{A complete comparison of hyper-parameter choices between CWAC and 5 baselines, including VIAC, ALH, LAP, SAC, and TD3.}
\begin{tabular}{lcccccc}
\toprule
\bf{Hyper-parameter} & CWAC&VIAC&ALH& LAP&SAC  &TD3\\
\midrule
Optimizer  & Adam& Adam& Adam&Adam& Adam& Adam \\
Critic learning rate & $3 \cdot 10^{-4}$& $3 \cdot 10^{-4}$& $3 \cdot 10^{-4}$& $3 \cdot 10^{-4}$& $3 \cdot 10^{-4}$& $3 \cdot 10^{-4}$ \\
Actor learning rate  & $3 \cdot 10^{-4}$& $3 \cdot 10^{-4}$ & $3 \cdot 10^{-4}$&$3 \cdot 10^{-4}$&$3 \cdot 10^{-4}$&$3 \cdot 10^{-4}$ \\
Target update rate $\tau$  & $5 \cdot 10^{-3}$& $5 \cdot 10^{-3}$& $5 \cdot 10^{-3}$& $5 \cdot 10^{-3}$ & $5 \cdot 10^{-3}$& $5 \cdot 10^{-3}$ \\ 
Batch size  & $256$& $256$& $256$& $256$ & $256$& $256$\\ 
Discount factor & $0.99$ & $0.99$ & $0.99$ & $0.99$ & $0.99$& $0.99$ \\
Number of critics  & 2& 2 &2&2 & 2& 2\\
Hidden dimension  & $256$& $256$& $256$& $256$ & $256$& $256$ \\ 
Start timesteps  & 1e4& 1e4& 25e3& 25e3 & 25e3& 25e3 \\
Policy update frequency $d$& 1 &1& 2&1&1&2\\
 Policy noise& - &- & $\N(0, 0.2)$ &-& -  &$\N(0, 0.2)$ \\ 
 Noise clip range  &-& - &  $[-0.5,0.5]$&-& - & $[-0.5,0.5]$  \\
  Priority scaling factor $p$  &-& - &  -&0.4& - & - \\
  Value function  &Huber& expectile &  MSE&Huber& MSE & MSE \\
 Pessimistic value coefficient $\mu$  &0.8& - &  -&-& - & - \\
  Uncertainty coefficient $\beta_\omega$ &1& - &  -&-& - & - \\
  TD-error coefficient $\beta_\xi$&2& - &  -&-& - & - \\
\bottomrule
\end{tabular}
\label{all_hyper}
\end{table*}

\subsection{Runtime Analysis}
\label{sec:runtime_analysis}

To evaluate the computational efficiency of the proposed CWAC algorithm, we conducted comprehensive experiments on the HalfCheetah-v3 task. The experimental environment was equipped with a 56-core CPU and an Nvidia RTX 4090 GPU running on Linux. Each algorithm was trained for 2 million time steps to ensure a fair comparison of total wall-clock time.

\begin{table*}[!ht]
\centering
\caption{Comparison of total training duration (Wall-clock time) across baseline algorithms on the HalfCheetah-v3 task.}
\label{tab:consumer_time}
\begin{tabular}{lcccccc}
\toprule
\textbf{Methods} & CWAC & VIAC & ALH & LAP & SAC &TD3 \\
\midrule
Runtime (Hours) & 7.3 & 8.9 & 6.4 & 6.8 & 6.6 & 5.1 \\
\bottomrule
\end{tabular}
\end{table*}

As illustrated in Table~\ref{tab:consumer_time}, CWAC introduces only a marginal increase in training duration compared to the vanilla SAC (7\%). This modest overhead is primarily attributed to the gradient updates for the distributional Q-networks and the corresponding computation of uncertainty parameters used in collaborative weighting. These results demonstrate that CWAC attains superior stability and performance without incurring prohibitive computational overhead, making it practical for high-dimensional control tasks.

\section{Conclusions}
\label{conclusion}
\vspace{-1em}
We propose CWAC, a dynamic weighting framework that integrates two complementary mechanisms: stochastic pessimistic value estimation and a collaborative weighting scheme. By jointly weighting the TD-error and predictive uncertainty under a stochastic pessimistic value estimation formulation, CWAC produces more reliable value estimates and facilitates stable policy optimization. Extensive experiments demonstrate that, using a single set of hyperparameters, CWAC consistently suppresses value overestimation while maintaining stable value learning across a wide range of continuous control benchmarks. Comprehensive ablation studies further verify the effectiveness of each individual component, confirming that their synergy is essential to the observed performance improvements.

Despite these advantages, stochastic pessimistic value estimation inevitably introduces a degree of conservatism. Although such conservatism effectively mitigates overestimation bias, excessive pessimism may lead to suboptimal policy performance by discouraging beneficial exploration. An important direction for future work is therefore to develop unbiased or adaptively calibrated value learning mechanisms that better balance overestimation suppression with effective exploration, thereby achieving both robust value estimation and improved policy performance.

\section{Declarations}
\subsection{Funding}
This study was supported by National Major Science and Technology Special Project (CN) (Grant Number 2025ZD1604900).

\subsection{Conflicts of interest/Competing interests
}
The authors declare that they have no competing interests.
\subsection{Availability of data and material (data transparency)
}
The data that has been used is confidential.

\subsection{Code availability (software application or custom code)
}
The code used in this study is available upon request to the corresponding
author.

\subsection{Authors' contributions}
Gong Gao: Conceptualization, Methodology, Visualization, Writing – original draft,
Writing – review \& editing . Xiao Lai*: Project administration, Resources. Ziqi Xie: Writing – review \& editing. Guojie Chen:
Writing – original draft.  Xianhui Liu: Validation, Funding acquisition.
Weidong Zhao: Validation, Funding acquisition.

\bibliographystyle{elsarticle-num}

\bibliography{cas-refs}



 

 



\end{document}